\newtheorem{proposition}{Proposition}
\newtheorem{definition}{Definition}
\title{\LARGE \bf
Planning in Stochastic Environments with Goal Uncertainty
}
\author{Sandhya Saisubramanian$^{1}$ \and Kyle Hollins Wray$^{1,2}$ \and Luis Pineda$^{1}$ \and Shlomo Zilberstein$^{1}$
\thanks{Support for this work was provided in part by the U.S. National Science Foundation grants IIS-1524797 and IIS-1724101.}
\thanks{$^{1}$College of Information and Computer Sciences, University of Massachusetts Amherst, MA, USA.}%
\thanks{$^{2}$Alliance Innovation Lab Silicon Valley, Santa Clara, CA, USA. }%
}
\begin{document}

\maketitle
\thispagestyle{empty}
\pagestyle{empty}

\begin{abstract}

We present the Goal Uncertain Stochastic Shortest Path (GUSSP) problem | a general framework to model path planning and decision making in stochastic environments with goal uncertainty. The framework extends the stochastic shortest path (SSP) model to dynamic environments in which it is impossible to determine the exact goal states ahead of plan execution. GUSSPs introduce flexibility in goal specification by allowing a belief over possible goal configurations. The unique observations at potential goals helps the agent identify the true goal during plan execution. The partial observability is restricted to goals, facilitating the reduction to an SSP with a modified state space. We formally define a GUSSP and discuss its theoretical properties. We then propose an admissible heuristic that reduces the planning time using FLARES | a start-of-the-art probabilistic planner. We also propose a determinization approach for solving this class of problems. Finally, we present empirical results on a search and rescue mobile robot and three other problem domains in simulation.
\end{abstract}

\section{Introduction and Related Work}
Autonomous robots acting in the real world are often faced with tasks that require path planning in stochastic environments. These problems are typically modeled as a Stochastic Shortest Path (SSP) problem, which generalizes both finite and infinite-horizon Markov decision processes (MDPs) and is a convenient framework to model goal-driven problems~\cite{bertsekas1991analysis}. The objective in an SSP is to devise a sequence of actions such that the expected cost of reaching a \emph{known} goal state from the start state is minimized. 

Consider a search and rescue domain (Figure~\ref{fig:sr}), a motivating example where the robot has to devise a cost minimizing path to rescue people from a building~\cite{kitano1999robocup,pineda2015continual}. While the number of victims and the map of the building may be provided to the robot, only potential victim locations may be known ahead of plan execution. The unavailability of the exact goal states (victim locations) during planning time prevents the problem from being modeled as a standard MDP or SSP. In this work we assume that the exact goal states may be hard to identify, but historical data or noisy sensors allow the robot to establish a belief distribution over possible victim locations. The search and rescue domain is an instance of the \emph{optimal search for stationary targets}~\cite{hansen2007indefinite,stone2016search,lanillos2012minimum,trevizan13finding} | a class of problems in which  the target's exact location is unknown to the robot, but the robot can observe its current location and determine whether the target is in the current location. Hence, we assume that the robot is given well-defined goal conditions, but has uncertainty about the states that satisfy these goal conditions.
\begin{figure}
	\centering
	\subfigure[Problem setting]{\includegraphics[width=1.8in, height=.90in]{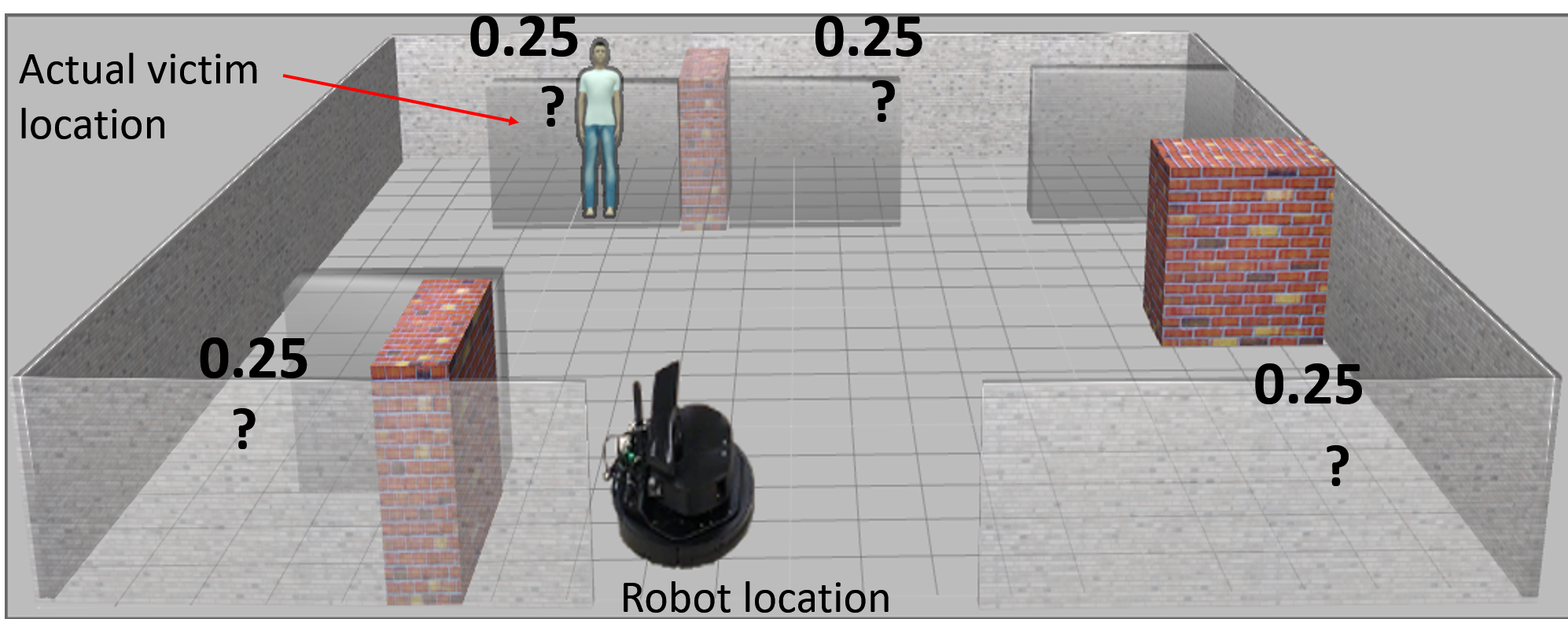}}
	\subfigure[Experimental setting]{\includegraphics[width=1.5in,height=.88in]{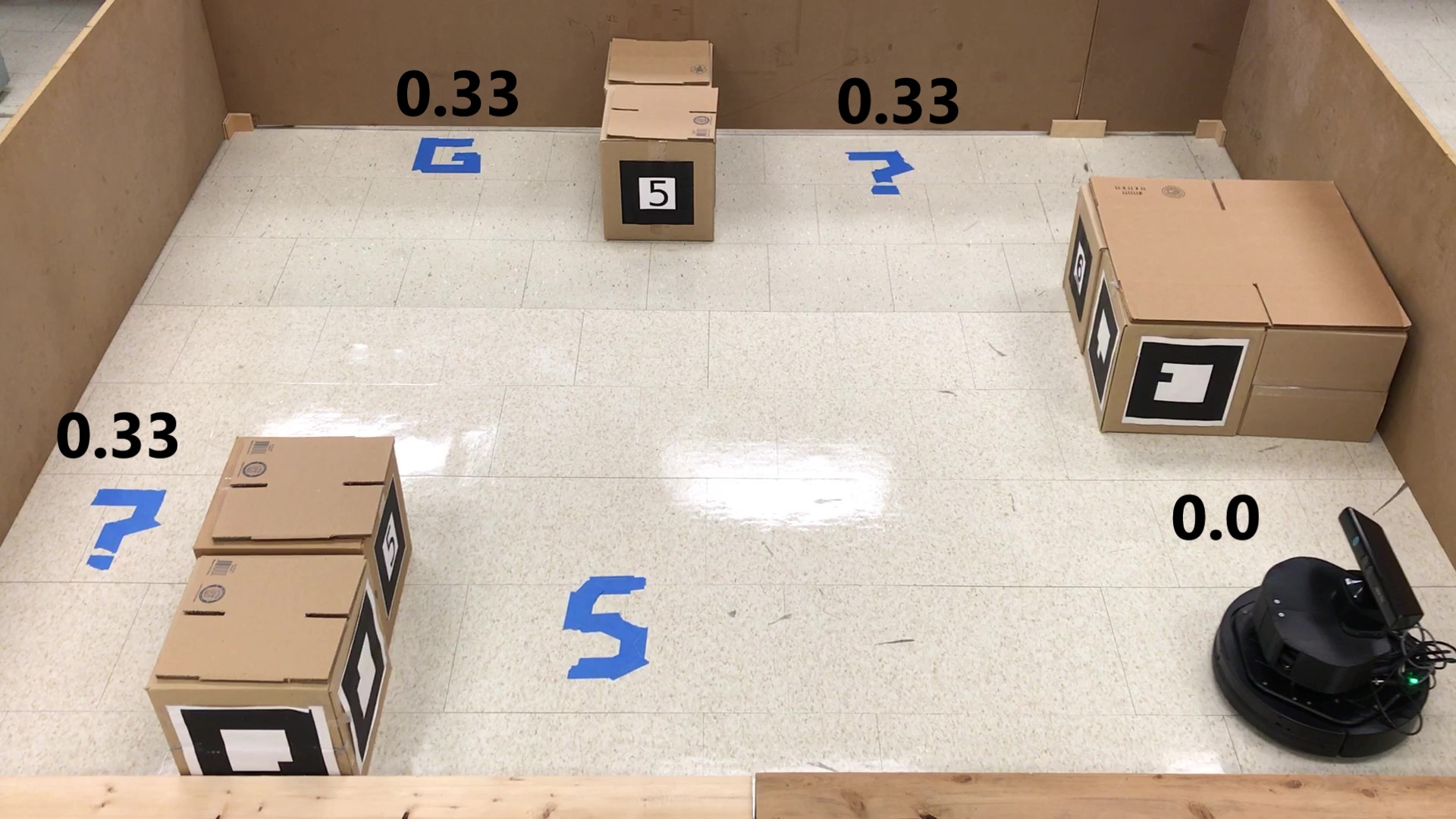}}
	\vspace{-3pt}
	\caption{An example of a search and rescue problem with goal uncertainty, showing 
	the initial belief (left) and the corresponding experimental setting 
	with a mobile robot and updated beliefs (right). The question marks indicate potential victim locations and values denote the robot's belief. S denotes the robot's start location and G is the actual victim location (goal). The robot updates its belief about the victim locations based on its observations.}
	\label{fig:sr}
	\vspace{-15pt}
\end{figure}

In the existing literature~\cite{nie2016searching,ong2010planning}, such problems are typically modeled as a Partially Observable MDP (POMDP)~\cite{kaelbling1998planning}, a rich framework that facilitates modeling various forms of partial observability. However, POMDPs are much harder to solve~\cite{papadimitriou1987complexity}. The partially observable SSPs (POSSPs) extend the SSP framework to settings with partially observable states, offering a class of indefinite-horizon, undiscounted POMDPs that rely on state-based termination~\cite{patek2001partially}. Other relevant POMDP variants are the Mixed Observable MDPs~\cite{ong2010planning} that model problems with both fully observable and partially observable state factors and the Goal POMDPs~\cite{bonet2009solving} that are goal-based with no discounting. These models are solved using POMDP solvers and are difficult to solve optimally. They also suffer from limited scalability due to their computational complexity~\cite{papadimitriou1987complexity}. Our objective in this work is to efficiently solve problems with goal uncertainty by leveraging the fully observable components of the problem.

We present goal uncertain SSP (GUSSP), a framework specifically designed to model problems with imperfect goal information by allowing for a probabilistic distribution over possible goals. GUSSPs fit well with many real-world settings where it is easier and more realistic to have belief over goal configurations, rather than exact knowledge about the goal states. The observation function in a GUSSP facilitates the reduction to an SSP, enabling the computation of tractable and optimal solutions. We address settings where the goals do not change over time and we assume the existence of a unique observation that allows the robot to accurately identify a goal when it reaches one.  

Our key contributions are: (i) a formal definition of GUSSP and its theoretical properties; (ii) a domain-independent, admissible heuristic that can accelerate probabilistic planners; (iii) a determinization approach for solving GUSSPs; and (iv) empirical evaluation on three realistic domains in simulation and on a mobile robot.

\section{Background: Stochastic Shortest Path}
A \textbf{Stochastic Shortest Path} (SSP) is a more general formulation of an MDP to model goal-oriented problems that require sequential decision making under uncertainty. Formally, an SSP is defined by the tuple $\langle S,A,T, C,s_0,S_G \rangle$, where $S$ is a finite set of states; $A$ is a finite set of actions; $T:\!S \!\times A\times S \rightarrow[0,1]$ is the transition function representing the probability of reaching a state $s'\in S$ by executing an action $a \in A$ in state $s \in S$, and denoted by $T(s,a,s')$; $C:\!S\times A \rightarrow \mathbb{R^+}\cup\{0 \}$ is the cost function representing the cost of executing action $a \in A$ in state $s \in S$, and denoted by $C(s,a)$; $s_0 \in S$  is the initial state; and $S_G \subseteq S$ is the set of absorbing goal states. The cost of an action is positive in all states except absorbing goal states, where it is zero. An SSP is an MDP with no discounting, that is, the discount factor $\gamma\!=\!1$. The objective in an SSP is to minimize the expected cost of reaching a goal state from the start state. It is assumed that there exists at least one \emph{proper policy}, one that reaches a goal state from any state $s$ with probability 1. The optimal policy, $\pi ^*$, can be extracted using the value function defined over the states, $V^*(s)$: 
\begin{align} 
V^*(s) &= \min_a ~~Q^*(s,a), \hspace{10pt} \forall s \in S \nonumber \\ 
Q^*(s,a)& =  C(s,a) \!+\! \sum_{s'}T(s,a,s') V^*(s'), \forall (s, a) \nonumber
\end{align} 
with $Q^*(s,a)$ denoting the optimal Q-value of the action $a$ in state $s$ in the SSP. While SSPs can be solved in polynomial time in the number of states, many problems of interest have a state-space whose size is exponential in the number of variables describing the problem~\cite{littman1997probabilistic}. This complexity has led to the use of various approximate methods that either ignore stochasticity or use a short-sighted labeling approach for quickly solving the problem.

\section{Goal Uncertain Stochastic Shortest Path}
A goal uncertain stochastic shortest path (GUSSP) problem is a generalized framework to model problems with goal uncertainty. A GUSSP is an SSP in which the agent may not know initially the exact set of goal states ($S_G$, which does not change over time), and instead can obtain information about the goals via observations. 
\begin{definition}
	A \textbf{goal uncertain stochastic shortest path problem} is a tuple $\langle X,S,A,T,C,s_0,S_G,P_G,\Omega, O \rangle$ where 
	\begin{itemize}
		\setlength\itemsep{.01 em}
		\item $S,A,T,C,s_0,S_G$ denote an underlying SSP with discrete states and actions, and $S_G$ unknown to the agent;
		\item $P_G \subseteq S$ is the set of potential goals such that $S_G \subseteq P_G$; 
		\item $X\!=\!S\!\times\!G$ is the set of states in the GUSSP with $G\!=\!2^{P_G}\!\setminus\!\{\emptyset\}$ denoting the set of possible goal configurations;  
		\item $\Omega$ is a finite set of observations corresponding to the goal configurations, $\Omega = G$; and
		\item $O\!:\!A\times X\times \Omega \rightarrow [0,1]$ is the observation function denoting the probability of receiving an observation, $\omega\!\in\!\Omega$, given action $a\in A$ led to state $x'$ with probability $O(a,x',\omega) \equiv Pr(\omega \vert a,x')$.
	\end{itemize}
\end{definition}
Each state is represented by $x\!=\!\langle s,g \rangle$, with $s\!\in\!S$ and $ g\in G$. GUSSPs have mixed observable state components as $s$ is fully observable. Each $g\!\in\!G$ represents a goal configuration (set of states), thus permitting multiple true goals in the model, $\vert S_G \vert\!\geq\!1$. Every action in each state produces an observation, $\omega\!\in\!\Omega$, which is a goal configuration and thus provides information about the true goals. The agent's belief about its current state is denoted by $b(x)$, with $x\!=\!\langle s,g\rangle$; that is, the belief about $g\!=\!S_G$. The initial belief is denoted by $b_0\langle s_0,g \rangle\!\in\![0,1], \forall g\in G$, where $s_0$ is the start state of the SSP. The process terminates when the agent reaches a state $x$ with $b(x)\!=\!1$ and $s\!\in\!g$. SSPs are therefore a special type of GUSSPs with a collapsed initial belief over the goals.

As in (PO)SSP, we assume in a GUSSP: (1) the existence of a proper policy with finite cost, (2) all improper policies have infinite cost, and (3) termination is perfectly recognized. 

\vspace{4pt}
\noindent{\textbf{Observation Function~}}
In a GUSSP, an observation function is characterized by two properties. First, to perfectly recognize termination, all potential goals are characterized by a unique belief-collapsing (when the belief over a state is either 1 or 0) observation. That is, at potential goal states, if $s'\in g'$, then $\forall a \in A$:
\begin{align}
O(a,x',\omega)   = \begin{cases}
1\quad \text{if } g' = \omega   \\
0 \quad \text{otherwise.}
\end{cases}
\label{uni-obs}
\end{align}
Second, the observation function is \emph{myopic}, providing information only about the current state or the potential goals in the immediate vicinity. This is based on real-world settings with limited range sensors and the exploration and navigation approaches for robots that acknowledge the perceptual limitations of robots~\cite{biswas2013multi}. The \emph{landmark states}, $L_s$, provide accurate information about certain potential goals. Each $s\!\in\!L_s$ provides observations about a subset of potential goals in the vicinity with $\Omega_{s}$ denoting the corresponding set of observations. Each $\omega\!\in\!\Omega_{s}$ provides information about the maximal set of potential goals in the vicinity. Other non-potential goal states, $s' \notin L_s$, provide no information about the true goals. Therefore, the observation at non-potential goal states is $\forall a\in A$:
\begin{align}
O(a,x',\omega)  \!=\!\begin{cases}
1 \hspace{5pt} \text{if } s' \in L_s\!\land \omega \subseteq g'  \land \omega\!\in\!\Omega_{s'}\\
0 \hspace{5pt} \text{if } s' \in L_s\!\land \omega \not\subseteq g'\land \omega\!\in\! \Omega_{s'} \\
\frac{1}{\vert\Omega \vert} \hspace{5pt} \text{if } s' \notin L_s \\
\end{cases}, \nonumber
\end{align}
with $x\!=\!\langle s,g\rangle$ and $x'\!=\!\langle s',g'\rangle$. 
The potential goals along with the landmark states are called \emph{informative states}, $\mathcal{I}\!=\!P_G \cup L_s$, since they provide information about the true goals through deterministic observations. Thus, our observation function satisfies the minimum information required for state-based termination. In the next section, we discuss a more general setting where every state may have a noisy observation regarding the true goals.

\vspace{4pt}
\noindent{\textbf{Belief Update~}} A belief $b$ is a probability distribution over $X$, $b(x)\!\in\! [0,1],\forall x\in X$ and $\sum_{x\in X} b(x)\!=\!1$. The set of all reachable beliefs forms the belief space $B\subseteq \Delta^n $, where $\Delta^n$ is the standard $(n\!-\!1)$-simplex. The agent updates the belief $b'\!\in\!B$, given the action $a \in A$, an observation $\omega \in \Omega$, and the current belief $b$. Using the multiplication rule, the updated belief for $x'\!=\!\langle s',g'\rangle$ is:
\begin{align}
b'(x'\vert b,a,\omega) &=  Pr(g'\vert b,a,\omega, s')\, Pr(s'\vert b,a,\omega,s) \nonumber \\
&= Pr(g'\vert b,a,\omega, s')\,T(s,a,s') \nonumber \\
Pr(g'\vert b,a,\omega,s') &= \eta Pr(\omega\vert b,a,s',g') Pr(g'\vert b,a,s') \nonumber \\
&= \eta O(a,x',\omega)\sum_{g\in G}Pr(g',g\vert b,a,s')\nonumber \\
&= \eta O(a,x',\omega) b(g), \label{g-update}
\end{align}
where $\eta = Pr(\omega \vert b,a,s')^{-1}$ is a normalization constant and $b(g)$ is the belief over the goal configuration. Therefore,
\begin{align}
b'(x'\vert b,a,\omega)= \eta O(a,x',\omega) b(g) T(s,a,s'). \label{bel-gussp}
\end{align}

\vspace{4pt}
\noindent{\textbf{Policy and Value~}} 
The agent's objective in a GUSSP is to minimize the expected cost of reaching a goal, $ \min_{\pi \in \Pi} \mathbb{E}\Big[\sum_{t=0}^{h} C(x_t,a_t)\Big| \pi \Big]$, 
where $x_t$ and $a_t$ denote the agent's state and action at time $t$ respectively, and $h \in \mathbb{N}$ denotes the horizon. A policy $\pi\!:\!B\!\rightarrow A$ is a mapping from belief $b\in B$ to an action $a \in A$. The value function for a belief, $V\!:\!B\rightarrow\!\mathbb{R}$ is the expected cost for a fixed policy $\pi$ and a horizon $h$. The Bellman optimality equation for GUSSPs follows from POMDPs:
\[V(b) = \min_{a \in A}\big[C(b,a)+ \sum_{\omega\in \Omega}Pr(\omega | b,a)V(b'_{a\omega})  \big],\]
where $b'_{a\omega}$ is the updated belief following Equation~(\ref{bel-gussp}), $C(b,a)\!=\!\sum_{x} b(x)C(x,a)$, $x\!=\!\langle s,g\rangle$, and $x\!=\!\langle s',g'\rangle$. A proper policy, $\pi$, in a GUSSP guarantees termination in a finite expected number of steps, $V^\pi(b_0)<\infty$.

\section{Theoretical Analysis}
In a GUSSP, the observation function critically affects the number of reachable beliefs. We begin with analyzing how the number of beliefs may grow in the more general (non-myopic observation) setting and then show that a GUSSP with myopic observations has finite reachable beliefs.

In a GUSSP with non-myopic observations, the nonpotential goal states, $\forall s \notin L_s$, provide stochastic observations about the true goals, resulting in infinitely many reachable beliefs. While this is a trivial fact, it is useful to understand the growth in complexity of the problem and it provides an important link to POMDPs via the belief MDP. The following proposition formally proves this complexity.

\begin{proposition}
	For all horizon $h\!>\!0$, the belief-MDP of a GUSSP with non-myopic observations may have $\mathcal{O}(\vert \Omega\vert ^h)$ states.\label{prop:exp}
\end{proposition}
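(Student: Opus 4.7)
The plan is to view the belief-MDP as a tree of reachable beliefs rooted at $b_0$, and show that when observations can be stochastic at every state (the non-myopic setting), this tree branches by a factor of $|\Omega|$ at each step without collisions. This gives both the $\mathcal{O}(|\Omega|^h)$ upper bound and an instance that saturates it.

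First I would establish the upper bound. Any reachable belief at depth $h$ is obtained from $b_0$ by $h$ successive action-observation updates via Equation~(\ref{bel-gussp}). Fixing a sequence of $h$ actions (or even letting actions range freely and then collapsing by action), each additional observation step produces at most $|\Omega|$ new children per parent belief, since $\omega$ ranges over $\Omega$. By a straightforward induction on $h$, the cardinality of the depth-$h$ frontier is at most $|\Omega|^h$, and summing the geometric series gives $\mathcal{O}(|\Omega|^h)$ for the total reachable belief set up to horizon $h$.

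Next I would show the bound is attained for some GUSSP, which is where the non-myopic assumption is essential. Because $s' \notin L_s$ is now permitted to emit a generic stochastic observation over $\Omega$, I have the freedom to choose $O(a,x',\cdot)$ so that every $\omega \in \Omega$ has strictly positive probability with distinct likelihoods in every $g'$ that is consistent with the current belief support. Under such a choice, the Bayes update in Equation~(\ref{g-update}) acts injectively in $\omega$ on the belief over $g$: two different observation histories produce two different posteriors over $G$, hence two different beliefs on $X = S \times G$. Consequently, the $|\Omega|^h$ distinct action-observation histories from $b_0$ (for a fixed proper action sequence) yield $|\Omega|^h$ distinct reachable beliefs, matching the upper bound.

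The main obstacle is the tightness half: verifying that no two distinct observation histories collide to the same posterior. I would handle this by choosing observation probabilities that are algebraically independent (or, more concretely, generic rationals with no common multiplicative relations), so that the product $\prod_{t} O(a_t, x'_t, \omega_t)$ appearing in the iterated Bayes update is a distinct function of the history. Since $\eta$ is only a per-step normalizer, distinct unnormalized posteriors over $G$ remain distinct after normalization, and combined with the fully observable factor $s'$ this certifies that the reachable belief set has size $\Theta(|\Omega|^h)$ in the constructed instance, completing the proof.
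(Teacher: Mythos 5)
Your upper-bound half is essentially the paper's proof: the paper also constructs the belief-MDP $\langle B,\mathcal{A},\tau,\rho\rangle$, notes that the reachable beliefs from $b_0$ form a tree of depth $h$ whose branching factor per step is $\mathcal{O}(\vert\Omega\vert)$, and concludes $\mathcal{O}(\vert\Omega\vert^h)$ reachable states. Since the proposition only asserts a ``may have $\mathcal{O}(\cdot)$'' bound, this half alone suffices, and on that portion you and the paper agree.

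The tightness half you add, however, contains a genuine flaw. For a fixed action sequence and a fixed trajectory of (fully observed) states $s'_1,\dots,s'_h$, the posterior over $G$ after $h$ steps is proportional to $b_0(g)\prod_{t} O(a_t,\langle s'_t,g\rangle,\omega_t)$, and this product is invariant under permuting the time indices: two observation histories that are permutations of one another (paired with the same states and actions) produce \emph{identical} unnormalized posteriors for every $g$, hence identical beliefs. Choosing algebraically independent or ``generic'' observation probabilities cannot rescue the injectivity claim, because the colliding products are equal as formal expressions, not merely numerically coincident. Consequently, along a fixed action sequence the number of distinct posteriors at depth $h$ is bounded by the number of multisets of (state, observation) pairs, which grows only polynomially in $h$ for fixed $\vert S\vert$ and $\vert\Omega\vert$; your claimed $\Theta(\vert\Omega\vert^h)$ lower bound for such an instance does not hold. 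To genuinely saturate the bound you would need observation-dependent action choices (so the per-step likelihood factors differ across histories) or some other mechanism breaking the commutativity of the Bayes update, and you would need to argue that explicitly. Since the paper never attempts a matching lower bound, the safest fix is simply to drop this half or downgrade it to the (correct) observation that the beliefs need not coalesce in general.
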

\begin{proof}
	By construction, we map GUSSP with non-myopic observations to a belief MDP $\langle B, \mathcal{A}, \mathcal{\tau}, \mathcal{\rho} \rangle$ with a horizon $h$~\cite{kaelbling1998planning}. Let $ \mathcal{R}(b_0)$ denote the set of reachable beliefs in the GUSSP. The set of states in the MDP is the set of reachable beliefs from $b_0$ in the GUSSP, $B = \mathcal{R}(b_0)$. The set of actions in the GUSSP are retained in the MDP, $\mathcal{A}\!=\!A$. The cost function $\rho(b,a)\!=\!\sum_{x\in X}b(x)C(x,a)$, where $C(x,a)$ corresponds to cost function of GUSSP. The transition function for the belief MDP is the probability of executing action $a\in \mathcal{A}$ in belief state $b \in B$ and reaching the reaching belief $b'$, and denoted by $\mathcal{\tau} (b,a,b')$, is:
	\begin{align}
	\mathcal{\tau} (b,a,b') &= \sum_{\omega \in \Omega }Pr(b',\omega\vert b,a) \nonumber \\
	&= 	\sum_{\omega \in \Omega }Pr(b'\vert b,a, \omega)Pr(\omega\vert b,a) \nonumber \\
	&=\sum_{\omega \in \Omega }Pr(\omega\vert b,a)[b' =b'_{a\omega}], \nonumber 
	\end{align}
	with Iversen bracket $[\cdot]$ and $b'_{a\omega}$ denoting the updated belief calculated using Equation~(\ref{bel-gussp}), after executing action $a$ and receiving observation $\omega$. The probability of receiving $\omega$ is:
	\begin{align}
	Pr(\omega\vert b,a) &=\! \sum_{x'\in X}Pr(\omega,x'\vert b,a) \nonumber \\
	&= \sum_{x'\in X} O(a,x',\omega)\sum_{x\in X} T(s,a,s')b(g'), \nonumber 
	\end{align}
	with $x=\langle s,g\rangle$ and $x'= \langle s',g'\rangle$. Since $\vert S \vert$ in the GUSSP is finite, a finite set of reachable beliefs in the GUSSP results in a finite set of reachable states in the belief MDP. This is a tree of depth $h$ with internal nodes for decisions and transitions, the branching factor is $\mathcal{O}(\vert \Omega\vert)$ for each horizon, $h$~\cite{papadimitriou1987complexity}. Therefore, the total number of reachable beliefs in the GUSSP is $\mathcal{O}(\vert \Omega\vert^h)$, and thus the resulting belief MDP may have $\mathcal{O}(\vert \Omega\vert^h)$ distinct reachable states.  
\end{proof}
In the worst case, the observation function may be unconstrained and all the beliefs may be unique. Since there is no discounting in a GUSSP and the horizon is unknown a priori, GUSSPs may have \emph{infinitely} many beliefs and their complexity class may be undecidable in the worst case~\cite{madani1999undecidability}. Hence, solving GUSSPs with non-myopic observations optimally is computationally intractable. 

We now prove that a myopic observation function results in a finite number of reachable beliefs in a GUSSP. 
\begin{proposition}
	A GUSSP with myopic observation function has a finite number of reachable beliefs. \label{finite-belief}
\end{proposition}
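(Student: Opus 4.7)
The plan hinges on two structural properties of GUSSPs: (i) the state component $s$ is fully observable, so any reachable belief can only branch in its marginal over goal configurations $g\in G$; and (ii) every informative-state observation has likelihood in $\{0,1\}$ under the myopic model, which turns belief updates into pure conditioning operations that are idempotent and commute with one another.

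First I would rewrite the update in Eq.~(\ref{bel-gussp}) as an update on the conditional $\beta(g):=b(\langle s',g\rangle)/b(s')$ over $G$. Since $s'$ is revealed through the fully observable component and goal configurations are time-invariant, the transition factor $T(s,a,s')$ is absorbed into the normalization and drops out of the $g$-marginal, yielding $\beta'(g')\propto O(a,\langle s',g'\rangle,\omega)\,\beta(g')$. It therefore suffices to bound the number of reachable $\beta$ for each $s'\in S$.

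Next I would case-split on the type of $s'$. For $s'\notin\mathcal{I}$, the myopic definition gives $O(a,\cdot,\omega)=1/|\Omega|$ independently of $g'$, so $\beta'=\beta$ after normalization: non-informative transitions create no new posteriors. For $s'\in\mathcal{I}$, Eq.~(\ref{uni-obs}) and the landmark clause force $O(a,\langle s',g'\rangle,\omega)\in\{0,1\}$, so the update is exactly the conditioning of $\beta$ on the subset $E_{s',\omega}:=\{g'\in G:O(a,\langle s',g'\rangle,\omega)=1\}$. As $s'$ ranges over the finite set $\mathcal{I}$ and $\omega$ over the finite $\Omega$, these conditioning events form a finite family $\mathcal{E}$. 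Because conditioning is idempotent and commutative, any sequence of updates applied to $\beta_0:=b_0(\cdot\mid s_0)$ yields $\beta_0$ restricted to some intersection $\bigcap_i E_i$ with $E_i\in\mathcal{E}$ and renormalized; a restriction of $\beta_0$ is determined by its support, and $G$ is finite, so there are at most $2^{|G|}$ reachable $\beta$, and hence at most $|S|\cdot 2^{|G|}$ reachable beliefs in total.

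The hard part will be justifying that non-informative observations truly cannot introduce new beliefs despite contributing $|\Omega|$ branches per step -- the precise mechanism that drove the exponential bound in Proposition~\ref{prop:exp}. The uniform factor at $s'\notin\mathcal{I}$ is $g'$-independent and so cancels in normalization, closing that door: only informative states drive belief branching, and they collectively generate only the finite conditioning lattice described above. A minor subtlety is that Eq.~(\ref{uni-obs}) is written under the proviso $s'\in g'$; any consistent extension to visits of potential goals that are not true goals ($s'\in P_G$, $s'\notin g'$) preserves $O\in\{0,1\}$ and therefore the conditioning structure, so the bound is unaffected.
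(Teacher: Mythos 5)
Your proposal is correct and follows the same skeleton as the paper's proof (case split on informative versus non-informative states, with the uniform likelihood cancelling in normalization so that non-informative steps leave the goal-marginal unchanged), but your treatment of the informative case is genuinely finer and, in one respect, more faithful to the model. The paper's proof asserts that an informative observation always yields $b'(g')\in\{0,1\}$, which is true for the potential-goal observations of Eq.~(\ref{uni-obs}) where exactly one configuration has likelihood $1$, but is not true for landmark observations: when $O=1$ for every $g'\supseteq\omega$, the posterior is $b(g')/\sum_{g''\supseteq\omega}b(g'')$, a renormalized restriction rather than a collapse. Your conditioning-lattice argument --- every reachable goal-marginal is $\beta_0$ restricted to an intersection of finitely many events $E_{s',\omega}$ and renormalized, hence determined by its support, giving at most $2^{|G|}$ marginals and $|S|\cdot 2^{|G|}$ beliefs --- covers this case correctly and supplies an explicit bound the paper never states. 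What the paper's shorter argument buys is brevity and a direct link to the "belief-collapsing or no information" dichotomy it set up when defining myopic observations; what yours buys is a proof that actually goes through for the landmark clause and a quantitative bound on the size of the compiled SSP, which is what Proposition~\ref{prop-ssp} ultimately needs. Your closing remark about extending Eq.~(\ref{uni-obs}) to visits of potential goals with $s'\notin g'$ is a real ambiguity in the paper's observation function, and your observation that any $\{0,1\}$-valued extension preserves the conditioning structure is the right way to dispose of it.
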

\begin{proof}
	By definition, a myopic observation function produces either belief-collapsing observations or no information at all. For each case, we first calculate the updated belief for the goal configurations using Equation~(\ref{g-update}). Therefore, $\forall x' \in X$ with $x'= \langle s',g'\rangle$: 
	\begin{equation}
	b'(g')\!=\!\frac{O(a,x',\omega)\,b(g)}{\sum_{x'} O(a,x',\omega)b(g)}. \nonumber
	\end{equation}
	\underline{Case 1: Belief-collapsing observation.} Trivially, when $O(a,x',\omega)\!=\!0 $, the updated belief is $b'(g')\!=\!0$. When $O(a,x',\omega)\!=\!1 $, the updated belief is $b'(g')\!=\!1$.
	\underline{Case 2: No information.} When the observation provides no information, $\forall a \in A, O(a,x',\omega)\!=\!1/\vert\Omega \vert$. Then, 
	\begin{equation}
	b'(g')= \frac{b(g)/\vert\Omega \vert}{\sum_{x'} b(g)/\vert\Omega \vert}=b(g). \nonumber
	\end{equation}
	Thus, $\forall g\!\in\!G$, a myopic observation function produces collapsed belief or retains the same belief, resulting in a finite number of reachable beliefs for a goal configuration. Since $\vert S\vert$ is finite, the belief update following Equation~(\ref{bel-gussp}) would result in finite number of reachable beliefs for a GUSSP.
\end{proof}
Hence, a myopic observation function weakly monotonically collapses beliefs, allowing us to simplify the problem further. We now show that a GUSSP reduces to an SSP, similar to the mapping from a POMDP to belief-MDP~\cite{kaelbling1998planning}.
\begin{proposition}
	A GUSSP reduces to an SSP.\label{prop-ssp}
\end{proposition}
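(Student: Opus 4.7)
The plan is to explicitly construct an SSP $M' = \langle S', A', T', C', s'_0, S'_G \rangle$ whose optimal value function coincides with that of the GUSSP, and then verify $M'$ satisfies the SSP axioms. The critical enabling fact is Proposition~\ref{finite-belief}: the set of reachable beliefs $B$ over goal configurations is finite, so the natural belief-space construction now yields a finite state space rather than the exponential one of Proposition~\ref{prop:exp}.

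First I would define the augmented state space as $S' = S \times B$, keep the action set $A' = A$, set $s'_0 = (s_0, b_0)$, and designate goals as $S'_G = \{(s,b) \in S' : b(g) = 1 \text{ for some } g \text{ with } s \in g\}$, precisely the states where GUSSP termination is recognized. The transition function would marginalize the GUSSP dynamics over observations,
\[T'((s,b), a, (s',b')) = T(s,a,s') \sum_{\omega \in \Omega} Pr(\omega \mid b, a, s')\, [b' = b'_{a\omega}],\]
with $b'_{a\omega}$ computed from Equation~(\ref{bel-gussp}). The cost would take the expected form $C'((s,b), a) = \sum_g b(g)\, C(\langle s,g\rangle, a)$ on non-goal augmented states and zero on $S'_G$.

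Next I would verify the SSP conditions in order. Finiteness of $S'$ follows from finite $S$ together with finite $B$ (Proposition~\ref{finite-belief}). Absorbing, zero-cost goals hold by construction. Positivity of $C'$ on non-goal states is inherited from the underlying SSP cost function, since $C'$ is a convex combination of nonnegative values that is strictly positive whenever $(s,b) \notin S'_G$. The existence of a proper policy with finite cost, and the infinite cost of any improper policy, are inherited directly from the GUSSP assumptions stated after the definition.

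The main obstacle, and the substantive part of the argument, is establishing policy equivalence: showing that any Markov policy $\pi': S' \to A'$ induces a belief-based GUSSP policy $\pi : B \to A$ with identical expected cost, and vice versa. Because $s$ is fully observable in the GUSSP, the augmented state $(s,b)$ carries exactly the same information as the GUSSP belief over $X$, so the Bellman equation stated in the Policy and Value paragraph coincides with the Bellman equation of $M'$. This mirrors the belief-MDP construction used in Proposition~\ref{prop:exp}, but with $|S'|$ finite rather than $\mathcal{O}(|\Omega|^h)$, yielding a genuinely tractable SSP reduction.
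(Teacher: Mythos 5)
Your proposal is correct and follows essentially the same route as the paper: map the GUSSP to its belief MDP (as in Proposition~\ref{prop:exp}), invoke Proposition~\ref{finite-belief} for finiteness of the reachable belief set, identify the start and collapsed-belief goal states, and inherit the proper-policy condition from the GUSSP assumptions. Your version is more explicit --- factoring the state as $S \times B$ using full observability of $s$, writing out the marginalized transition and expected cost, and checking the SSP axioms one by one --- but these are details the paper leaves implicit in the phrase ``by construction,'' not a different argument.
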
 
\begin{proof}
	We map the GUSSP to a belief MDP $\langle B, \mathcal{A}, \mathcal{\tau}, \mathcal{\rho} \rangle$ with a horizon $h$~\cite{kaelbling1998planning}, as in Proposition~\ref{prop:exp}. By Proposition~\ref{finite-belief}, a GUSSP with myopic observation function has a finite number of reachable beliefs and therefore, finite states in the belief-MDP. By construction, this belief-MDP is an SSP with the start state $\bar{s}_0\!=\!b_0$ and the goal states, $\bar{S}_G$, are the set of states with $\bar{b}(x)\!=\!1$ such that $\bar{b}(g)\!=\!1$ and  $s\!\in\!g$. Since there exists a proper policy in a GUSSP, the policy in this SSP is proper by construction. Thus, a GUSSP with myopic observation function reduces to an SSP.
\end{proof}
The reduction to an SSP facilitates solving GUSSPs using the existing rich suite of SSP algorithms. For ease of reference and clarity, we refer to the above-mentioned SSP as compiled-SSP in the rest of this paper.

\vspace{4pt}
\noindent{\textbf{Relation to Goal-POMDPs~}}
The Goal-POMDP~\cite{bonet2009solving} models a class of goal-based and shortest-path POMDPs with positive action costs and no discounting. The set of target (or goal) states, $\bar{P}$ have unique belief-collapsing observations.
Hence, a Goal-POMDP is a GUSSP when the partial observability is restricted to goals, the observations set is $2^{\bar{P}}\setminus\!\{\emptyset\}$, and observation function is myopic. 
\begin{proposition}
	GUSSP $\subset$ Goal-POMDP.
\end{proposition}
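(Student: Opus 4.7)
The plan is to prove the set containment in two steps: first show $\mathrm{GUSSP} \subseteq \mathrm{Goal\text{-}POMDP}$ by exhibiting a direct structural embedding, and then establish that the containment is strict by constructing a Goal-POMDP that cannot be expressed as a GUSSP.

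For the containment direction, I would argue by construction. Given a GUSSP $\langle X,A,T,C,s_0,S_G,P_G,\Omega, O\rangle$, I would build the corresponding Goal-POMDP tuple directly: the state space, action space, transition and observation functions, and initial belief carry over unchanged, while the target set is taken to be $\bar{P} = \{x = \langle s,g\rangle \in X : s \in g\}$. I would then verify each condition in the Goal-POMDP definition: (i) costs are positive except at $\bar{P}$, inherited from the underlying SSP cost function; (ii) there is no discounting, again inherited from the SSP component; (iii) a proper policy with finite cost exists, which is an assumption built into the GUSSP; and (iv) every state in $\bar{P}$ has a unique belief-collapsing observation, which is exactly what Equation~(\ref{uni-obs}) guarantees. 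This establishes $\mathrm{GUSSP} \subseteq \mathrm{Goal\text{-}POMDP}$.

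For strict containment, I would exhibit a Goal-POMDP that is not a GUSSP. The natural choice is any Goal-POMDP in which partial observability concerns non-goal features of the state, e.g., a robot with uncertain localization but a known, fixed goal cell: the partial observability is over position rather than over the goal configuration, so the state cannot be factored as $S \times G$ with $S$ fully observable. Since the defining restriction of a GUSSP is precisely that partial observability is confined to the goal component and that non-potential-goal, non-landmark states emit uninformative (uniform) observations, such an example lies outside the GUSSP class while still satisfying the Goal-POMDP conditions of positive action costs, no discounting, and unique belief-collapsing observations at targets.

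The main obstacle I anticipate is not the containment itself, which is essentially a reading of the definitions, but rather being precise in the witnessing example for strictness: I need to make sure the Goal-POMDP I exhibit really violates the GUSSP observation structure (uniform on non-informative states, belief-collapsing on potential goals and landmarks only) rather than merely looking different on the surface. A clean way to handle this is to pick an instance where the observation function is provably non-myopic or where some non-goal state factor is partially observable, so that no relabeling of states and observations can recast it into the form $X = S \times G$ required by Definition~1.
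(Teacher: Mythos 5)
Your argument is correct and takes essentially the same approach as the paper, which states this proposition without a formal proof and relies on the surrounding prose: containment because a GUSSP is precisely a Goal-POMDP whose partial observability is restricted to goals, with observation set $2^{\bar{P}}\setminus\{\emptyset\}$ and a myopic observation function, and strictness because general Goal-POMDP observations are unconstrained (e.g., non-myopic, or over non-goal state factors) and may yield infinitely many reachable beliefs. Your definition-checking embedding and your concrete witness for strictness simply make explicit what the paper leaves as a remark.
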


The observations in a Goal-POMDP are not constrained and may result in infinitely many reachable beliefs (Proposition~\ref{prop:exp}). This makes it computationally challenging to compute optimal policies~\cite{papadimitriou1987complexity}, unlike GUSSPs which are more tractable can be solved optimally (Proposition~\ref{prop-ssp}).

\vspace{4pt}
\noindent{\textbf{GUSSP with Deterministic Transitions~}}
A GUSSP with deterministic transitions presents an opportunity for further reduction in complexity. 
\begin{proposition}
	The optimal policy for a GUSSP with myopic observations and deterministic transitions is the minimum arborescence of a weighted and directed graph $Z$.
\end{proposition}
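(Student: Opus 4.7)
The plan is to establish the correspondence between policies of a GUSSP with deterministic transitions and arborescences of a suitable directed graph $Z$, then argue that minimizing expected cost is equivalent to finding a minimum-weight arborescence.

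First, I would exploit determinism: since $T(s,a,s')\in\{0,1\}$, the shortest-path travel cost $d(u,v)$ between any two states $u,v$ in the underlying state space is well-defined and computable (e.g., by Dijkstra). Between informative-state visits, a non-dominated policy can never improve by detouring, since no belief update occurs off $\mathcal{I}=P_G\cup L_s$ (myopic observation function). Hence, without loss of optimality, the policy's behavior reduces to choosing \emph{which} informative state to visit next and then riding a shortest path to it.

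Second, I would construct $Z=(V,E,w)$ with $V=\{s_0\}\cup\mathcal{I}$ and directed edges weighted by a combination of $d(\cdot,\cdot)$ and the prior $b_0$, and then show that any (non-dominated) policy $\pi$ induces an arborescence $T_\pi\subseteq Z$ rooted at $s_0$. The argument uses Proposition~\ref{finite-belief}: observations are belief-collapsing at informative states, so after visiting $v\in\mathcal{I}$ the belief partitions the remaining goal configurations into finitely many disjoint classes, each triggering a distinct subtree of subsequent informative visits. Each execution branch terminates at a true-goal observation, giving a rooted tree over $\mathcal{I}$. Conversely, any arborescence of $Z$ rooted at $s_0$ prescribes such a policy.

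Third, I would define the edge weight $w(u,v)$ so that the total weight of $T_\pi$ exactly equals $V^\pi(b_0)$. A natural choice is $w(u,v)=d(u,v)\cdot\Pr_{b_0}(\text{edge }(u,v)\text{ traversed under }\pi)$: summing over the arborescence recovers $\sum_{(u,v)\in T_\pi} d(u,v)\Pr((u,v)\text{ traversed})=\mathbb{E}_{b_0}[\text{total travel cost}]=V^\pi(b_0)$. Consequently, minimizing $V^\pi(b_0)$ over $\pi$ is equivalent to computing the minimum-weight spanning arborescence of $Z$ rooted at $s_0$, solvable by the Chu-Liu/Edmonds algorithm.

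The main obstacle is showing that the traversal probabilities decompose so that each edge weight is independent of the rest of the tree, which is required for standard minimum-arborescence algorithms to apply. This is handled by noting that, under belief-collapsing observations, the probability of traversing a given edge $(u,v)$ is determined solely by the goal prior mass of the subtree rooted at $v$, i.e., $\Pr((u,v)\text{ traversed})=\sum_{g\in\mathrm{desc}(v)}b_0(g)$, where $\mathrm{desc}(v)$ denotes the goal configurations consistent with $v$'s subtree. With this decoupling the correspondence is immediate, and the optimality of the minimum arborescence follows.
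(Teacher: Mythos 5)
Your overall construction mirrors the paper's: a directed weighted graph over a root plus the potential goals, edges given by shortest trajectories (well-defined by determinism), weights combining travel cost with belief, and the optimal policy identified with the minimum arborescence. (Minor differences: the paper's vertex set is a dummy root $r$ plus the potential goals only, not all of $\mathcal{I}$, and the paper attaches the root by a zero-cost edge to $s_0$ rather than using $s_0$ itself.)

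However, there is a genuine gap in your third step, and it is precisely the step you flag as the main obstacle. You define $w(u,v)=d(u,v)\cdot\Pr(\text{edge }(u,v)\text{ traversed under }\pi)$, which makes $w(T_\pi)=V^\pi(b_0)$ true by construction, but the weight then depends on the policy. Your proposed fix --- that the traversal probability equals $\sum_{g\in\mathrm{desc}(v)}b_0(g)$ --- does not actually decouple anything: the set $\mathrm{desc}(v)$ of goal configurations consistent with $v$'s subtree is determined by the arborescence you are trying to optimize over, so the weight of edge $(u,v)$ still varies with the rest of the tree. With tree-dependent weights, ``the minimum arborescence of $Z$'' is not a well-posed combinatorial problem and Chu--Liu/Edmonds does not apply. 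The paper avoids this by fixing the weight up front as $w(e)=d(x,y)\,(1-b(y))$, where $b(y)$ is the (static, initial) belief that $y$ is a goal --- an edge-local quantity --- and then asserting that the minimum arborescence under these fixed weights yields the optimal visiting order with $V^*(s_0)=w(A)$. If you want to keep your exact-expected-cost weights, you would need a separate argument that the minimizer is invariant to the coupling (e.g., an exchange argument over visiting orders), which your writeup does not supply; alternatively, adopt the paper's fixed weights and argue directly that they rank policies correctly.
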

\begin{proof} 
	Consider a GUSSP with deterministic transitions and a dummy start state, $r$, that transitions to the actual start state with probability 1 and zero cost. This can be represented as a directed and weighted graph, $Z\!=\!(V,E,w)$, such that $V\!=\{r\} \cup \{x \in X \vert x\!=\!\langle s,g\rangle \land s\!\in\!P_G\}$; that is, the start state and the potential goals are the vertices. Each edge $e\!\in\!E$ denotes a trajectory in the GUSSP between vertices. The proper policy in a GUSSP ensures that there is at least one edge between each pair of vertices. The weight of an edge connecting $x,y\!\in\!V$ is $w(e)\!=\!d(x,y)(1\!-\!b(y))$, with $d(x,y)$ denoting the cost of the trajectory and $b(y)$ is the belief over $y$ being a goal. The minimum arborescence (directed minimum spanning tree) of this graph, $A$, contains trajectories such that the total weight is minimized, $\min_{A \in \mathcal{A}} w(A)$ with $w(A)\!=\!\sum_{e\in A}w(e) $. By construction, this gives the optimal order of visiting the potential goals and hence the optimal policy for the GUSSP with $V^*(s_0)\!=\!w(A)$.
\end{proof}

\section{Solving Compiled-SSPs}
We propose (i) an admissible heuristic for SSP solvers that accounts for the goal uncertainty and (ii) a determinization-based approach for solving the compiled-SSP.

\subsection{Admissible Heuristic}
In heuristic search-based SSP solvers, the heuristic function helps avoid visiting states that are provably irrelevant. An efficient heuristic for solving the compiled-SSP guides the search by 
accounting for the goal uncertainty. We propose a simple heuristic for the compiled-SSP that accounts for goal uncertainty and is calculated as follows: 
\[ h_{pg}(x) \triangleq \min_{g\in G} \Big((1-b(g))\,\min_{i\in g} d(x,i)\Big)\]
where $d(x,i)$ denotes the cost of the shortest trajectory to the potential goal $i$ from state $x$ and $b(g)$ is the agent's belief of $g$ being a true goal. Multiplying by the probability of a state not being a goal ($1-b(g)$) breaks ties in favor of configurations with a higher probability of being a goal, with a lower heuristic value. The following proposition shows that the proposed heuristic is admissible.
\begin{proposition}
	$h_{pg}$ is an admissible heuristic.
\end{proposition}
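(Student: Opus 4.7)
The plan is to sandwich both the heuristic and the optimal value by a common belief-free quantity, namely $D(x) := \min_{i \in P_G} d(x,i)$, the shortest-trajectory cost from $x$ to the nearest potential goal, and then chain $h_{pg}(x) \le D(x) \le V^*(x)$.

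First I would establish the upper bound $h_{pg}(x) \le D(x)$. Since $b(g) \in [0,1]$ for every $g \in G$, each factor $(1 - b(g)) \le 1$, so $h_{pg}(x) \le \min_{g \in G}\min_{i \in g} d(x,i)$. Because $G = 2^{P_G}\setminus\{\emptyset\}$ contains the singleton $\{i\}$ for every $i \in P_G$, and $g \subseteq P_G$ for every $g$, this double minimum is exactly $D(x)$.

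Second I would argue $V^*(x) \ge D(x)$. Any proper policy executed from $x$ in the compiled-SSP induces a trajectory in the underlying SSP that must reach some true goal state, and that state lies in $S_G \subseteq P_G$. With action costs nonnegative and $d(x,i)$ defined as the shortest-trajectory cost from $x$ to $i$, every realized sample path from $x$ to a goal incurs cost at least $\min_{i \in P_G} d(x,i) = D(x)$, and this pointwise bound survives the expectation defining $V^*$.

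Chaining the two inequalities gives $h_{pg}(x) \le V^*(x)$, establishing admissibility at every reachable state of the compiled-SSP. The main delicacy lies in the lower bound on $V^*$: one needs to spell out that $d(x,i)$, as a shortest-trajectory cost in the (possibly stochastic) underlying SSP, is a legitimate lower bound on the expected cost-to-reach, which follows from cost-nonnegativity together with the fact that any sample path generated by a policy is itself a feasible path to its terminating goal. Nothing else in the argument depends on the particular shape of $b$, so the heuristic remains admissible uniformly over reachable beliefs.
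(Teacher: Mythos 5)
Your proof is correct, and it reaches the conclusion by a somewhat different decomposition than the paper's. The paper works configuration-by-configuration: it introduces $d^*(x,g)$, the expected cost of reaching goal configuration $g$, argues $\min_{i\in g} d(x,i)\le d^*(x,g)$ because the shortest trajectory cannot exceed an expected cost, and then asserts that multiplying by the belief factor and minimizing over $g$ preserves admissibility, leaving the final chaining implicit. You instead sandwich both sides through the single belief-free quantity $D(x)=\min_{i\in P_G} d(x,i)$: the upper bound $h_{pg}(x)\le D(x)$ uses $(1-b(g))\le 1$ together with the fact that $G=2^{P_G}\setminus\{\emptyset\}$ contains every singleton, and the lower bound $V^*(x)\ge D(x)$ is a pointwise sample-path argument (any realized trajectory under a proper policy terminates at some $i^*\in S_G\subseteq P_G$, is itself a feasible trajectory and hence costs at least $d(x,i^*)\ge D(x)$, and nonnegativity of costs lets the bound survive the expectation). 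Your route is arguably the cleaner of the two: the paper's intermediate claim is conditioned on ``all paths existing from $x$ to all potential goals in $g$,'' a caveat your argument does not need because you only ever compare against the goal the policy actually reaches, and you make explicit the step the paper compresses into its last sentence. Both proofs ultimately rest on the same two facts --- shortest-trajectory cost lower-bounds expected cost-to-go, and the factor $(1-b(g))$ can only shrink the estimate.
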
 
\begin{proof}
	To show that $h_{pg}$ is admissible, we first show that $\min_{i\in g}d (x,i)$ is an admissible estimate of the expected cost of reaching a goal configuration $g$ from state $x$. Let $d^*(x,g)$ be the expected cost of reaching $g$ from $x$. Since $d (x,g)$ is the cost of the shortest trajectory to $g$ from $x$, $d (x,g)\!\leq\!d^*(x,g)$. If all paths exist from $x$ to all potential goal states $i\!\in\!g$, then by definition, the shortest trajectory to a goal configuration is the minimum distance to a potential goal in $g$. That is, $d (x,g)\!=\!\min_{i\in g}d (x,i)$ and therefore $\min_{i\in g} d (x,i)\!\leq\!d^*(x,g)$. Multiplying this value by the belief and using the minimum value over all possible goal configurations guarantees that $h_{pg}$ is an admissible estimate of the expected cost reaching a true goal configuration.
\end{proof}

\begin{table*}[t]  
	\centering \small
	\renewcommand{\arraystretch}{1.08}
	\caption{Comparisons of average cost and planning time (seconds) with bold titles indicating our techniques.}
	\resizebox{7.0in}{!} {
		\begin{tabular}{|c|r | r|r|r|r|r|r|r|r|r|r|}
			\hline
			& \multicolumn{2}{c|}{LAO* (Optimal solver)}
			& \multicolumn{2}{c|}{Flares(1)-$h_{min}$}  
			& \multicolumn{2}{c|}{\textbf{Flares(1)-$\mathbf{h_{pg}}$}}
			& \multicolumn{2}{c|}{\textbf{Det-MLG}}
			& \multicolumn{2}{c|}{\textbf{Det-CG}}
			\\ \hline
			Problem Instance  &  \multicolumn{1}{|p{1.3cm}|}{\centering Cost} &  \multicolumn{1}{|p{1cm}|}{\centering Time}  &  \multicolumn{1}{|p{1.3cm}|}{\centering Cost } &  \multicolumn{1}{|p{1cm}|}{\centering Time}  &  \multicolumn{1}{|p{1.3cm}|}{\centering Cost} &  \multicolumn{1}{|p{1cm}|}{\centering Time}  &  \multicolumn{1}{|p{1.3cm}|}{\centering Cost} &  \multicolumn{1}{|p{1cm}|}{\centering Time} & \multicolumn{1}{|p{1.3cm}|}{\centering Cost} &  \multicolumn{1}{|p{1cm}|}{\centering Time}
			\\ \hline \hline
			rover (20,6) & 28.25 & 14.99 & 
			35.35 $\pm$ 2.67 & 1.08 & 
			30.34 $\pm$ 2.37 & 0.17 & 
			36.71 $\pm$ 2.62 & 0.07 & 
			45.51 $\pm$ 3.22 & 0.06 \\
			rover (20,7) & 42.16 & 30.19 &  
			43.49 $\pm$ 1.62 & 1.17 & 
			45.07 $\pm$ 1.77 & 0.83 & 
			49.69 $\pm$ 1.91 & 0.02 & 
			48.36 $\pm$ 1.43 & 0.03 \\
			rover (30,8) & 36.96 & 190.92 & 
			38.21 $\pm$ 1.83 & 2.27 & 
			41.31 $\pm$ 1.97 & 0.16 & 
			38.54 $\pm$ 1.54 & 0.02 & 
			40.34 $\pm$ 1.82 & 0.03 \\
			rover (30,9) & 34.72 & 832.56 & 
			38.21 $\pm$ 2.54 & 7.56 & 
			43.32 $\pm$ 2.54 & 1.73 & 
			50.27 $\pm$ 2.58 & 0.88 & 
			49.49 $\pm$ 1.97  & 0.45 \\
			search (20,4) & 87.63 & 15.78 & 
			94.32 $\pm$ 0.58 & 1.45 & 
			93.32 $\pm$ 0.58 & 0.98 & 
			91.22 $\pm$ 0.67 & 1.05 & 
			90.42 $\pm$ 0.61 & 0.86 \\
			search (20,5) & 74.61 &  14.42 & 
			83.83 $\pm$ 0.56 & 2.99 & 
			81.91 $\pm$ 0.56 & 1.93 & 
			78.32 $\pm$ 0.56 & 1.98 & 
			79.74 $\pm$ 6.37 & 0.98 \\
			search (20,5) & 86.72 & 63.71 & 
			94.21 $\pm$ 0.79 & 6.21 & 
			91.18 $\pm$ 1.46 & 1.93 & 
			87.74 $\pm$ 0.65 & 0.66 & 
			89.98 $\pm$ 0.59 & 1.68 \\
			search (30,6) & 90.89 & 267.35 & 
			94.21 $\pm$ 1.35 & 117.63 & 
			103.77 $\pm$ 3.42 & 21.07 & 
			101.67 $\pm$ 1.61 & 12.68 & 
			92.94 $\pm$ 0.68 & 19.50 \\
			ev (-,5) & 2.34 & 8.16 & 
			3.29 $\pm$ 1.55 & 2.21 & 
			4.89 $\pm$ 1.36 & 0.92  & 
			5.15 $\pm$ 1.46 & 0.52 &
			7.17  $\pm$ 1.43 & 0.62 \\
			ev (-,6) & 3.46 & 10.79 & 
			4.89 $\pm$ 1.96 & 2.25 & 
			5.96 $\pm$ 1.96 & 1.14 & 
			7.15 $\pm$ 2.46 &  0.88 & 
			8.17  $\pm$ 1.43 & 0.79 \\
			\hline
	\end{tabular}}
		\vspace{-4pt}
	\label{tab:results}
\end{table*}

\subsection{Determinization}
Determinization is a popular approach for solving large SSPs as it simplifies the problem by replacing the probabilistic outcomes of an action with a single deterministic outcome~\cite{yoon2007ff,saisubramanian2019reduced}. We extend determinization to a GUSSP by ignoring the uncertainty about the goals. The agent plans to reach one potential goal (determinized goal) at a time, simplifying the problem to a smaller SSP. During execution, if the determinized goal is not a true goal, the agent replans for another unvisited potential goal. This approximation scheme offers considerable speedup over solving the compiled-SSP.  

We consider two determinization approaches: (i) most-likely goal determinization (DET-MLG) and (ii) closest-goal determinization (DET-CG). In the DET-MLG, the most-likely goal is determinized, based on its current belief. In DET-CG, the agent determinizes the closest goal based on the heuristic distance to the potential goal (with non-zero belief) from its current state. We resolve ties randomly.

\section{Experiments}
We begin with a comparison of different approximate solution techniques for solving the compiled-SSP on three domains in simulation. We then test the model on a real robot with three different initial belief settings.

\subsection{Evaluation in Simulation}
We experiment with three domains to evaluate the solution techniques in handling (i) location-based goal uncertainty (planetary rover domain, search and rescue domain) and (ii) temporal goal uncertainty (electric vehicle (EV) charging problem using real-world data). The expected cost of reaching the goal and run time (in seconds) are used as evaluation metrics. A uniform initial belief is considered for all the domains in these experiments. We solve the compiled-SSPs optimally using LAO*~\cite{hansen2001lao}, which is an optimal solver based on A*~\cite{hart1968formal} for solving MDPs with loops, and approximately using FLARES, a domain-independent state-of-the-art algorithm for solving large SSPs using horizon=1~\cite{pineda2017fast}, as well as the two determinization methods. 
The $h_{min}$ heuristic, computed using a labeled version of LRTA*~\cite{bonet2003labeled}, is used as a baseline for evaluating $h_{pg}$.


\begin{figure*}
	\setlength{\fboxsep}{0pt}
	\setlength{\fboxrule}{1pt}
	\centering
	\fbox{\includegraphics[width=1.7in, height=1.3in]{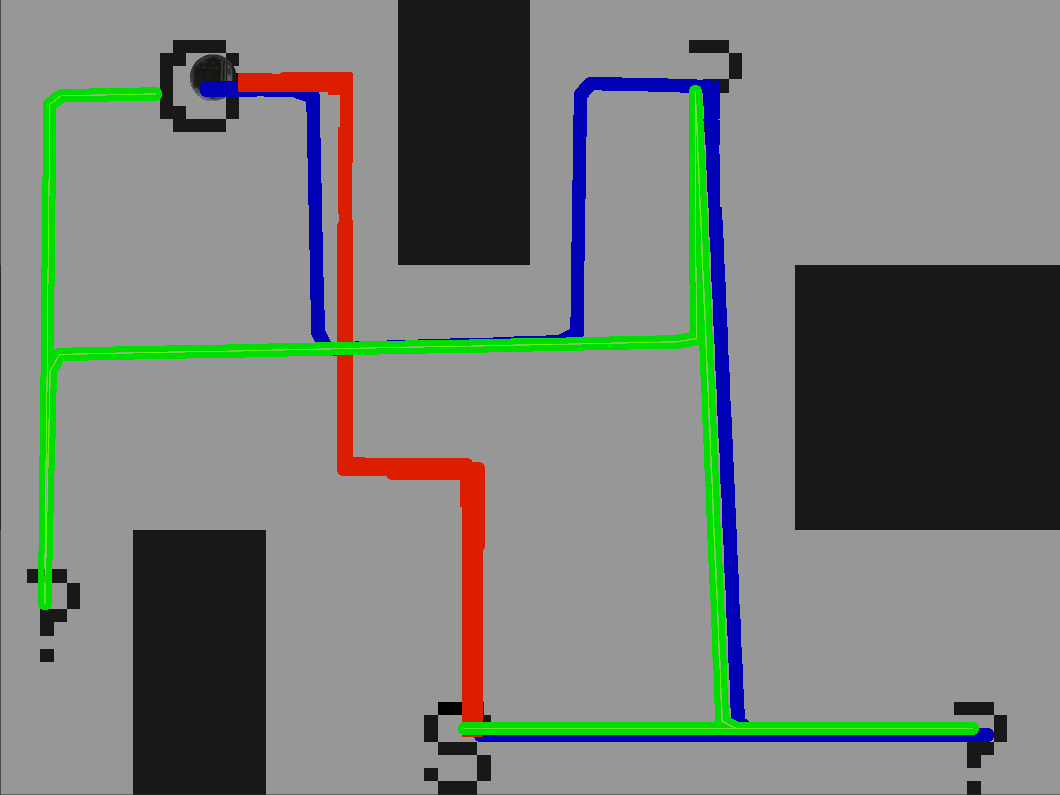}} 
	\fbox{\includegraphics[width=1.68in, height=1.3in]{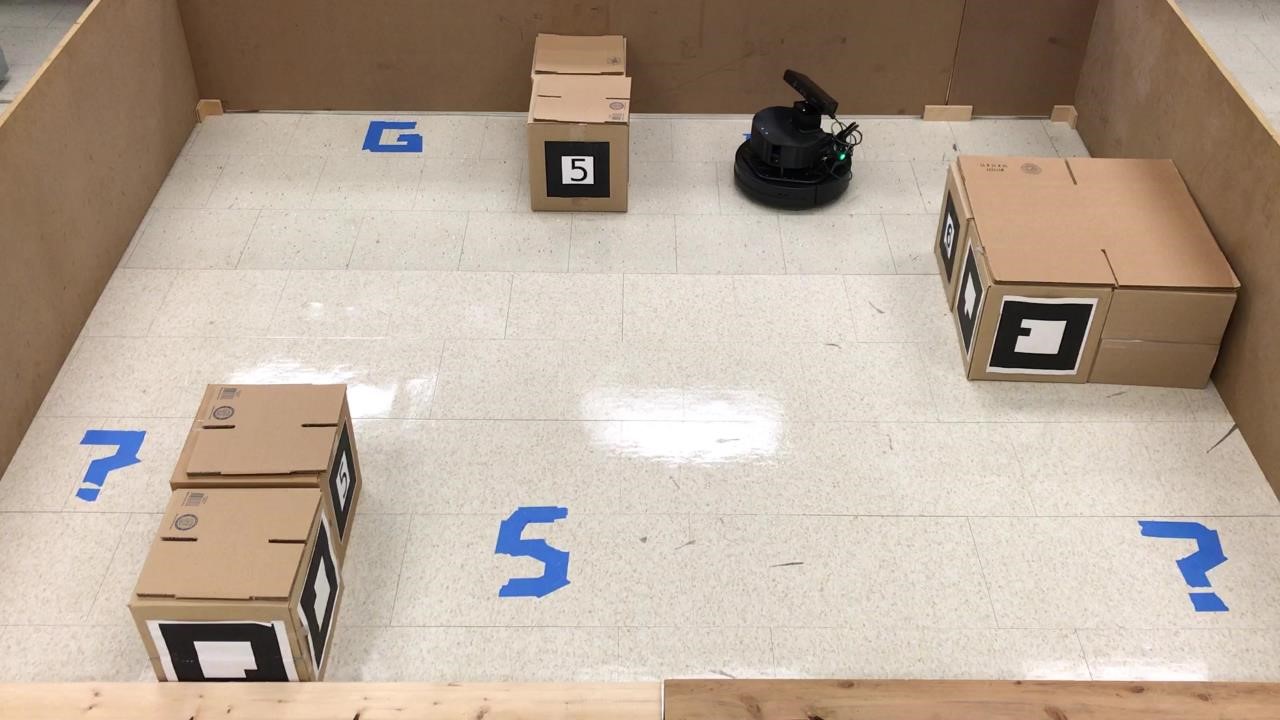}} 
	\fbox{\includegraphics[width=1.68in, height=1.3in]{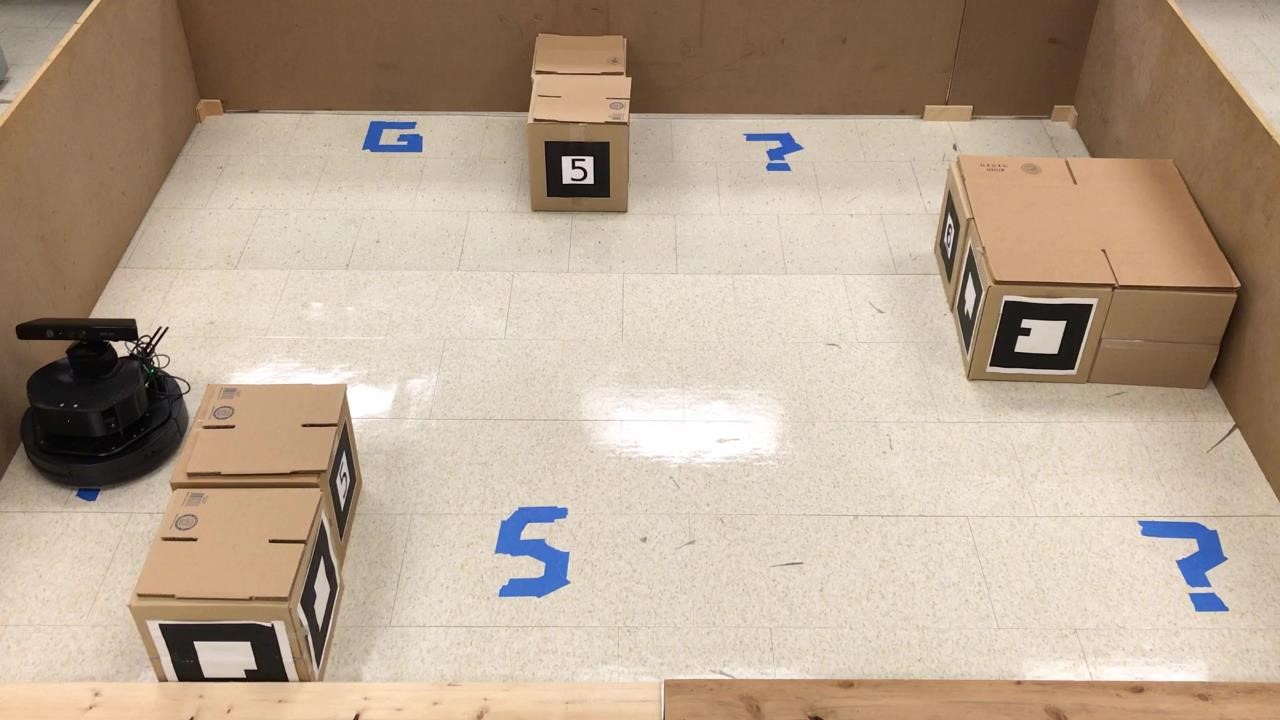}} 
	\fbox{\includegraphics[width=1.68in, height=1.3in]{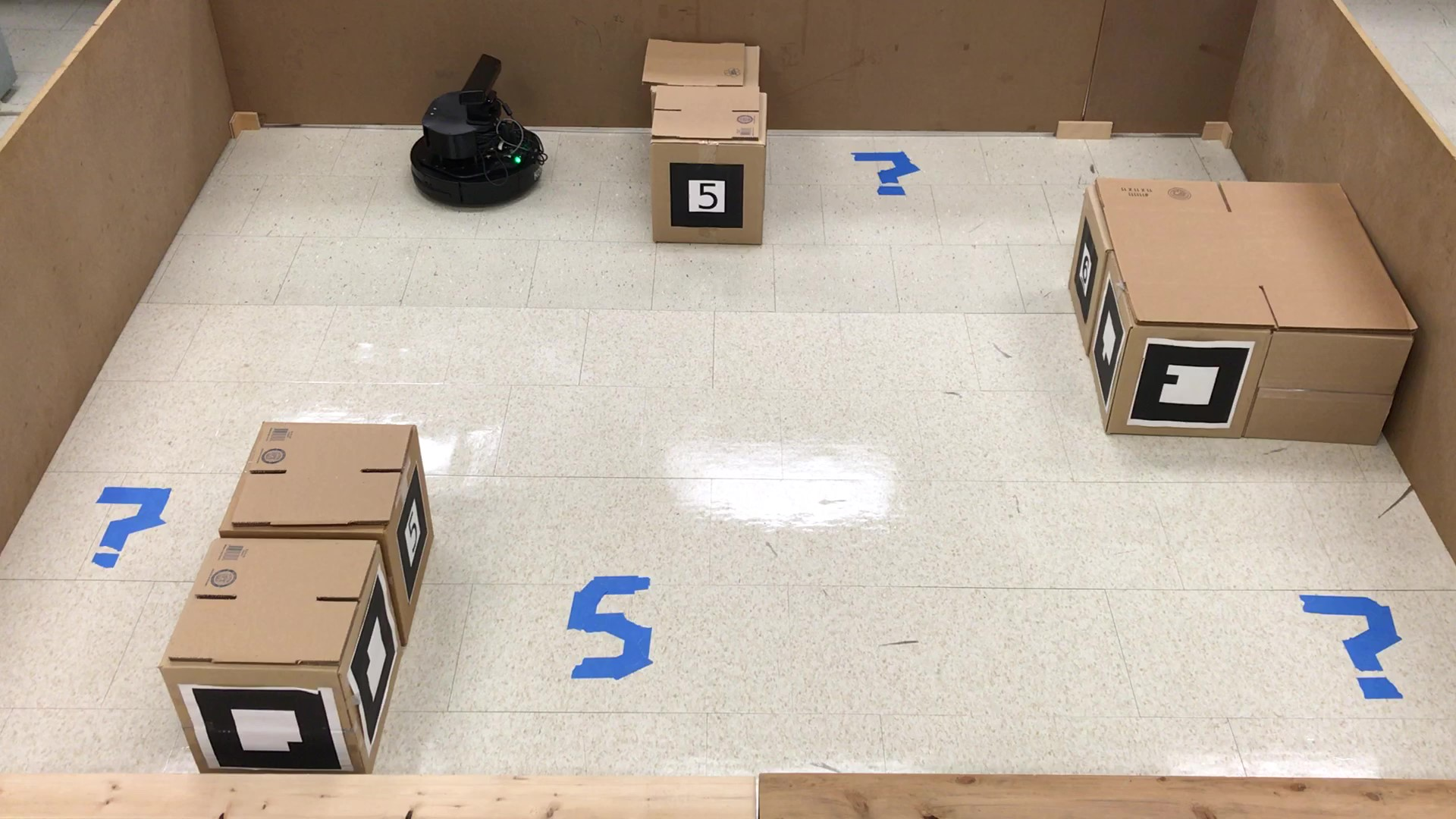}}
	\vspace{-8pt}
	\caption{Demonstration of the path taken by the robot with three different initial beliefs for the map in Figure~\ref{fig:sr}. The start state and the true goal state are denoted by S and G, respectively. The other potential goals are denoted by the question mark symbol. Green, blue, and red show the path taken by the robot with 0.1, 0.25, and 0.9 as the initial belief for the true goal state and equal probability for other potential goal states.}
	\label{fig:simulation}
\end{figure*}

\vspace{4pt}
\noindent{\textbf{Planetary Rover~}}
This domain models the rover science exploration~\cite{ZWBMlnai02,ong2010planning} that explores an environment described by a known map to collect a mineral sample. There are $n$ sample locations and the samples at each of these locations may be `good' or `bad', $\vert P_G \vert = n$. The rover knows its own position ($x,y$) exactly, as well as those of the samples but does not know which samples are `good'. The process terminates upon collecting a `good' sample. The actions include moving in all four directions, which succeed with a probability of $0.8$, and a \emph{sample} action which is deterministic. The \emph{sample} action costs $+2$ if the mineral is good and $+10$ otherwise; all other actions cost $+1$. 

\vspace{4pt}
\noindent{\textbf{Search and Rescue~}}
In this domain, an autonomous robot explores an environment described by a known map to find victims~\cite{pineda2015continual}. We modify the problem such that there are $m$ victims locations and $n$ total victims. The agent is aware of the potential victim locations and each location may or may not have victims. The exact locations of the victims are unknown to the robot a priori. The objective is to minimize the expected cost of saving all victims. The state factors include the robot's current location and a counter to indicate the number of victims saved so far. The observations indicate the presence of victims in each state. The actions include moving in all four directions and a SAVE action that saves all the victims in a state. The move actions cost $+1$ and are stochastic, succeeding with $0.8$ probability. The SAVE action is deterministic and costs $+2$. Any obstacle the robot may encounter is accounted for in the transition function.

\vspace{4pt}
\noindent{\textbf{Electric Vehicle Charging~}}
We experimented with the electric vehicle (EV) charging domain, operating in a vehicle-to-grid setting~\cite{saisubramanianoptimizing,saisubramanian2019reduced}, where the EV can charge and discharge energy from a smart grid. The objective is to devise a robust policy that is consistent with the owner's preferences, while minimizing the operational cost of the vehicle. We modified the problem such that parking duration of the EV is uncertain with $H$ denoting the horizon. The potential goals in this problem are the possible departure times. The EV can fully observe its current charge level and the time step. In our experiments, $\vert P_G\vert\!=\!n$ denotes that  $P_G\!=\!\{H,H-1,..,H\!-\!n\}$. Each $t$ is equivalent to 30 minutes in real time. If the EV's exit charge level does not meet the owner's desired exit charge level, a penalty is incurred. 

The battery capacity and the charge speeds for the EV are based on Nissan Leaf configuration and the action costs and peak hours are based on real data~\cite{pricing}. The charge levels and entry time data are based on charging schedules of electric cars over a four month duration in 2017 from a university campus. The data is clustered based on the entry and exit charges, and we selected 25 representative problem instances across clusters for our experiments.

\vspace{4pt}
\noindent{\textbf{Discussion~}}
Table~\ref{tab:results} shows the results of the five techniques on various problem instances, in terms of cost and runtime(s) respectively.
The grid size and the number of potential goals for each problem is indicated in parenthesis in the table. The results are averaged over 100 trials and standard errors are reported for the expected cost. The results for the EV domain are averaged over 25 problem instances. We experiment with no landmark states to demonstrate the performance in the worst case setting. 
In terms of expected costs, the performance of the approximate techniques are comparable. The runtimes for solving the problems optimally, however, scales rapidly as the number of potential goals increases. The advantage of using FLARES with $h_{pg}$ and the determinization techniques are more evident in the runtime savings. FLARES using our heuristic $h_{pg}$ is significantly faster than using the baseline $h_{min}$ heuristic. Both the determinization techniques are faster than solving the problem using FLARES. 

\subsection{Evaluation on a mobile robot} The robot experiment aims to visually explain how the belief distribution alters the robot's trajectory. 
Figure~\ref{fig:simulation} shows the results in a ROS simulation and on a real robot for a simple search and rescue problem with one agent and four potential victim locations for the map shown in Figure~\ref{fig:sr}. We test with three different initial beliefs: uniform, optimistic, and pessimistic. The corresponding belief of the true goal, $G$, in each belief setting is: $0.25$, $0.9$, and $0.1$, with the other potential goals having equal probability.

\section{Conclusion and Future Work}
The goal uncertain SSP (GUSSP) provides a natural model for representing real-world problems where it is non-trivial to identify the exact goals ahead of plan execution. While a general GUSSP could be intractable, we identify several tractable classes of GUSSPs and propose effective approaches for solving them. Specifically, we show that a GUSSP with a myopic observation function can be reduced to an SSP, allowing us to efficiently solve it using existing SSP solvers. We also propose an admissible heuristic that accounts for goal uncertainty in its estimation and a fast solver based on extending the notion of determinization to handle goal uncertainty. The results show that GUSSPs can be solved efficiently using scalable algorithms that do not rely on POMDP solvers. In the future, we aim to explore other conditions under which GUSSPs have a bounded set of beliefs that supports the development of efficient solvers, and examine the implications of goal uncertainty in multi-agent settings~\cite{AZaamas09} and other contexts.

\addtolength{\textheight}{-12cm}   


\bibliographystyle{IEEEtran}
\bibliography{References_POSSP-2}

\end{document}